\documentclass[accepted]{uai2026} 
                        
\usepackage[british]{babel}

\usepackage{amsmath, amssymb, amsthm, graphicx, bm}
\usepackage{hyperref}

\usepackage{caption}
\usepackage{subcaption}

\usepackage[ruled,vlined]{algorithm2e}

\usepackage{booktabs}
\usepackage{multirow}
\usepackage{array}
\usepackage{siunitx}
\DeclareMathOperator*{\argmin}{arg\,min}

\theoremstyle{plain}
\newtheorem{theorem}{Theorem}

\newtheorem{corollary}{Corollary}

\theoremstyle{definition}

\theoremstyle{remark}

\usepackage{natbib} 
\usepackage{mathtools} 
\usepackage{tikz} 

\title{Conformal Risk Sharing: Certified Cost Allocation with Participation Guarantees}

\author[ ]{Ieva Kazlauskaite}
\affil[ ]{%
    Department of Statistics\\
    London School of Economics and Political Science\\
    United Kingdom
}
  
\begin{document}
\maketitle

\begin{abstract}
Sharing the financial impact of rare adverse events across a group can soften extreme individual burdens, but any participant made worse off by the arrangement has reason to leave. A credible mechanism must therefore provide each agent with a trustworthy cap on their future obligation and should be deployed only if the aggregate harm across participants is bounded. We formalise this as the Certified Allocation Problem: from finite data and without distributional assumptions, find a redistribution rule, produce obligation caps for every participant, and verify that no participant is made materially worse off. We propose Conformal Risk Sharing, which solves this problem by pairing an interpretable sharing policy with split conformal calibration. The sharing intensity is tuned on training data, while held-out calibration data produces distribution-free per-agent guarantees (valid under exchangeability). Experiments on synthetic and real-world data, including precipitation and energy-cooperative data, confirm that the framework can substantially reduce extreme obligations for high-risk agents while controlling harm to others.
\end{abstract}

\section{Introduction}
\label{sec:intro}

A recurring problem in multi-agent systems is the redistribution of realised costs under uncertainty. When a group of agents jointly faces random shocks, an allocation policy determines how the realised burden is shared. Effective pooling can reduce each agent's exposure to extreme outcomes but it also creates winners and losers, and any agent made worse off has an incentive to defect. The challenge is therefore threefold: learn a policy that improves aggregate tail exposure from finite data, certify that the improvement holds with high confidence for each participant, and verify that the policy respects explicit participation constraints.

This problem arises naturally in several domains. In peer-to-peer (P2P) parametric insurance, trigger events (\emph{e.g.}, rainfall exceeding a threshold) induce a vector of payouts across members, and the pooling rule determines how the resulting obligations are shared. Heavy tails, spatial dependence, and climate nonstationarity make parametric tail models unreliable \citep{grossi2005catastrophe}, while participation is voluntary: members who perceive that pooling increases their high-confidence obligation cap will leave, and the resulting adverse selection can unravel the pool \citep{kocherlakota1996implications,ligon2005formation}. Fairness is equally critical: if pooling systematically increases the upper bound on obligations of low-risk members, those members exit first, unravelling the pool (see Appendix~\ref{sec:p2p_parametric_appendix}). In \emph{cooperative energy communities}, households in a local energy cooperative share electricity costs, and each household requires a certified cap on its obligation to plan expenditure.
In \emph{shared compute infrastructure}, multiple tenants redistribute cloud-resource costs after demand is realised, and each tenant needs a certified spending cap for capacity planning. In each case, the essential structure is the same: a random cost vector must be redistributed under conservation, every agent needs an individual tail guarantee, and the mechanism must remain acceptable relative to a baseline option.

We call this the \emph{Certified Allocation Problem} and formalise it in Sec.~\ref{sec:problem}. It sits at the intersection of cooperative cost sharing, distribution-free statistical inference, and mechanism design under uncertainty, none of which alone addresses all three requirements. Cooperative game theory and actuarial risk sharing \citep{denuit2022risk,charpentier2025linear} characterise fair and efficient allocations under known or assumed distributions, but do not produce finite-sample tail certificates from data. Conformal prediction and distribution-free risk control \citep{vovk2005algorithmic,angelopoulos2024conformal}
provide finite-sample guarantees, but address single-agent prediction or decision problems rather than multi-agent redistribution with participation constraints. Distributionally robust and chance-constrained optimisation \citep{campi2011sampling,delage2010distributionally} provide feasibility guarantees for a single decision under uncertainty. The Certified Allocation Problem requires $n$ simultaneous per-agent tail certificates, all depending on the same policy and coupled through a conservation constraint. This multi-agent structure does not reduce to $n$ independent chance constraints, since improving one agent's obligation cap necessarily affects others.

\paragraph{Conformal Risk Sharing.}
We propose Conformal Risk Sharing, a concrete solution framework for the Certified Allocation Problem. An interpretable linear sharing policy is tuned on training data, then certified once on held-out data via split conformal calibration. The result is a \emph{certified per-agent obligation cap} with finite-sample, distribution-free validity: each participant receives a high-confidence upper bound on future obligation, without relying on parametric tail models. The mechanism may only be deployed if an explicit audit confirms that the certified caps satisfy a participation constraint bounding harm to each agent.

Our main contributions are the following:
\begin{enumerate}[leftmargin=*,itemsep=2pt,topsep=2pt]
\item \textbf{Problem formulation.} We identify a gap at the intersection of cooperative cost sharing, conformal inference, and robust optimisation that no existing framework addresses. We formalise the Certified Allocation Problem (Sec.~\ref{sec:problem}): jointly select a redistribution policy, produce per-agent distribution-free obligation caps, and verify participation constraints from finite data. 
\item \textbf{Solution framework.} We propose Conformal Risk Sharing (Sec.~\ref{sec:method}), a train--select--certify pipeline that produces per-agent tail certificates (Theorem~\ref{thm:main}) and system-level guarantees (Corollary~\ref{cor:system}) via split conformal calibration, with participation constraints enforced directly in the certified quantities.
\item \textbf{Empirical validation.} On synthetic heavy-tailed data, gridded precipitation losses, and energy cooperative consumption (Sec.~\ref{sec:experiments}), the framework delivers certified tail relief while controlling harm and maintaining coverage at or near nominal levels.
\end{enumerate}

\section{Related Work}
\label{sec:related}

We review related work in cooperative cost sharing, distribution-free inference, and optimisation under uncertainty, and clarify how the Certified Allocation Problem differs from each.

\paragraph{Risk sharing, P2P insurance, and participation constraints.}
The actuarial literature studies risk-sharing rules under known or modelled distributions. \citet{denuit2022risk} provide a systematic treatment of allocation rules (conditional mean, quantile) and their axiomatic properties; \citet{charpentier2025linear} develop a comprehensive framework for linear risk sharing on networks, including the same identity-mixing parameterisation that we adopt; and \citet{feng2023peer} derive closed-form variance-minimising allocations. Throughout this literature, distributions are assumed known or estimated separately, and guarantees are expressed in terms of population quantities (means, variances, quantiles) rather than finite-sample certificates.
Participation and stability are classical concerns: voluntary pooling can unravel if participants lack individual rationality guarantees~\citep{kocherlakota1996implications,ligon2005formation}, and modern P2P designs introduce cashback or side-payment rules to maintain incentives~\citep{clemente2023optimal}. Cooperative game theory formalises fairness in cost sharing through solution concepts such as the core, Shapley value, and nucleolus, which characterise stable and equitable allocations under known cost structures \citep[see][for an overview]{moulin2002chapter}. Our participation constraints serve an analogous role but are expressed in certified tail quantities rather than population-level cost shares.

\paragraph{Conformal prediction and risk control.}
Conformal prediction provides finite-sample, distribution-free coverage
guarantees under exchangeability
\citep{vovk2005algorithmic,shafer2008tutorial,angelopoulos2023conformal}. Beyond marginal coverage, Conformal Risk Control \citep{angelopoulos2024conformal} and related methods \citep{bates2021distribution} extend these guarantees to user-chosen risk functionals. Our use differs from standard supervised prediction: the calibrated quantity is a post-decision random variable (per-agent obligation after applying an allocation rule), not a prediction error. A growing literature treats conformal outputs as inputs to downstream decisions \citep{vovk2018conformal,renkema2024conformal}, and several works connect conformal sets to robust optimisation \citep{johnstone2021conformal,lekeufack2024conformal,patel2026conformal}. These pipelines address single-agent or single-decision-vector optimisation problems. 
In the multi-agent setting, \citet{kuipers2024conformal} derive conformal joint prediction regions for agent trajectories under policy-induced distribution shift using reweighting ideas, which is related in spirit but targets off-policy trajectory forecasting rather than certified cost redistribution.
End-to-end conformal risk training \citep{yeh2025conformal,yeh2024end} differentiates through conformal objectives to shape decisions, but targets single-agent control with expressive uncertainty sets rather than multi-agent redistribution with participation constraints.
Our framework embeds conformal calibration inside a decision pipeline, with a train--select--certify separation \citep{vovk2018conformal,sarkar2023post,hegazy2025valid}: all policy selection uses training/validation data, and calibration is used only once for the final certificate.

\paragraph{DRO, chance constraints, and scenario optimisation.}
Chance-constrained optimisation, scenario methods, and distributionally robust optimisation (DRO) study decision-making under uncertainty by selecting a policy that satisfies probabilistic or worst-case constraints under sampled scenarios or an ambiguity set \citep{campi2011sampling,delage2010distributionally,rahimian2019distributionally}. Our setting differs in the object being certified: we require simultaneous per-agent tail certificates for all agents under a shared allocation policy, with agent outcomes coupled by conservation. In addition, participation is enforced by comparing each agent's certified bound under pooling against their baseline bound, a structure with no standard counterpart in DRO or scenario formulations.

\paragraph{Conformal inference under dependence and nonstationarity.}
In our settings, the data are temporally and/or spatially dependent, so we calibrate on coarse
blocks (\emph{e.g.}, years) and assume block exchangeability. There is growing work extending conformal ideas beyond i.i.d.\ exchangeability~\citep{chernozhukov2018exact,oliveira2024split}, and methods that reweight or adapt calibration sets to account for temporal drift have been proposed for financial time series~\citep{fantazzini2024adaptive,schmitt2026taming}. We treat certificates as valid for blocks exchangeable with the calibration regime and recommend periodic re-certification under drift.

\section{The Certified Allocation Problem}
\label{sec:problem}

We formalise the Certified Allocation Problem class introduced in
Sec.~\ref{sec:intro}. The setup requires guarantees derived directly from finite data without parametric assumptions, for a multi-agent redistribution policy subject to conservation and participation constraints.

\subsection{Setup}

Consider $n$ agents who collectively face a random nonnegative cost vector
$\tilde X \in \mathbb{R}^n_+$ drawn from an unknown distribution~$P$. We observe $B$ exchangeable realisations (``blocks'') of this vector, $\tilde x_1, \dots, \tilde x_B$, where
each block aggregates costs over a natural period (\emph{e.g.}, a year of parametric
insurance payouts, a billing cycle of shared compute costs, or a settlement
period in a cooperative energy community).

\paragraph{Allocation policies.}
An \emph{allocation policy} $A$ maps a realised cost vector to a vector of
\emph{obligations}:
\begin{equation}
  x_b(A) = \tilde x_b\, A, \label{eq:alloc}
\end{equation}
where $A \in \mathbb{R}^{n \times n}$ acts on the right so that agent~$i$'s
obligation is $x_{b,i}(A) = \sum_j \tilde x_{b,j} A_{ji}$. We restrict
attention to the feasible set of \emph{row-stochastic} matrices,
\begin{equation}
  \mathcal{A} = \Big\{ A \in \mathbb{R}^{n \times n} :\;
    A_{ji} \ge 0 \;\forall\, j,i, \quad
    \textstyle\sum_{i=1}^n A_{ji} = 1 \;\forall\, j \Big\},
  \label{eq:feasible}
\end{equation}
which enforces nonnegativity of obligations and conservation of total cost
within each block: $\sum_i x_{b,i}(A) = \sum_j \tilde x_{b,j}$.%
    \footnote{Row-stochasticity ($\sum_i A_{ji} = 1 \; \forall j$) ensures that each agent's loss is fully distributed but it does not constrain how much total exposure each agent receives. The stronger requirement of a \emph{doubly stochastic} $A$ ($\sum_i A_{ji} = 1 \; \forall j$ and $\sum_j A_{ji} = 1 \; \forall i$) additionally preserves each agent's expected cost~\citep{abdikerimova2022peer}. Our framework accommodates either constraint; we use row-stochasticity as the default to allow heterogeneous pooling structures.}
The baseline is the identity $A_0 = I$, under which each
agent bears their own realised cost.

\paragraph{Per-agent risk quantities.}
Each agent cares about the severity of extreme obligations they may face under a given policy. For a given policy $A$, let $X_i(A)$ denote agent $i$'s random obligation, and define $\rho_i(A)$ as the $(1{-}\delta)$-quantile of $X_i(A)$, \emph{i.e.} the smallest threshold exceeded with probability at most $\delta$ (sometimes called Value-at-Risk in the actuarial literature). The formulation extends to other tail risk measures (\emph{e.g.}, Conditional Value-at-Risk, expectiles), though certifying other choices requires different calibration procedures. The quantity $\rho_i(A)$ is a population-level summary estimated from data, and the quality of the estimate depends on $A$ since the policy transforms the underlying loss distribution.

\subsection{Requirements}

Given a finite sample of $B$ blocks, the mechanism seeks a policy
$A^\star \in \mathcal{A}$ together with per-agent caps
$\hat c_i(A^\star)$ satisfying the following requirements.

\paragraph{(R1) Per-agent tail validity.} Each \emph{certificate} is a finite-sample high-probability upper bound on the agent's post-allocation obligation:
  \begin{equation}
    P\big(X_i(A^\star) \le \hat c_i(A^\star)\big) \ge 1 - \delta,
    \qquad i = 1, \dots, n,
    \label{eq:coverage}
  \end{equation}
  without parametric assumptions on $P$. This provides every participant a
  legible guarantee: \emph{``with $1{-}\delta$ confidence, your obligation will not
  exceed $\hat c_i$.''} In P2P insurance this is a contribution cap, in energy communities it is a billing guarantee.

\paragraph{(R2) Aggregate efficiency.} The policy reduces an aggregate welfare
  objective relative to the baseline:
  \begin{equation}
    \Phi(A^\star) \;<\; \Phi(A_0), \qquad
    \Phi(A) := \textstyle\sum_{i=1}^n w_i\, \hat c_i(A),
    \label{eq:efficiency}
  \end{equation}
  where $w_i > 0$ are weights (normalised to $\sum_i w_i = 1$) that allow the mechanism to prioritise certain agents, \emph{e.g.}, uniformly ($w_i = 1/n$) or proportional to baseline exposure.

\paragraph{(R3) Participation (bounded harm).} The total certified harm
  imposed on agents relative to the baseline is bounded. We require a \emph{budgeted harm constraint}:
  \begin{equation}
    \mathrm{Harm}(A^\star) = \textstyle\sum_{i=1}^n w_i
    \big(\hat c_i(A^\star) - \hat c_i(A_0) - \eta\big)_+
    \;\le\; H,
    \label{eq:harm}
  \end{equation}
  with per-agent tolerance $\eta \ge 0$ and total budget
  $H = \varepsilon\, \Phi(A_0)$, a fraction $\varepsilon \in [0,1)$ of the baseline aggregate certified cost (the special case $\varepsilon = 0$ requires that no agent's certified cap increases, which is typically infeasible with finite calibration samples). These are governance parameters, not statistical hyperparameters: $\eta$ is a materiality threshold below which cap increases are ignored and $\varepsilon$ controls how much redistribution the community tolerates. 

\paragraph{(R4) Conservation.} Total obligations equal total realised costs in every block, ensuring the mechanism neither creates nor destroys value. This is enforced by construction through the row-stochasticity constraint~\eqref{eq:feasible}.

\subsection{Challenges}
\label{sec:why-hard}

Three features distinguish the Certified Allocation Problem from
related formulations.

\emph{Selection--certification coupling.} The policy $A^\star$ determines the distribution of obligations, so the data used to select $A^\star$ also informs the certificates. In distribution-free approaches, adaptively choosing a decision based on calibration data can invalidate coverage guarantees \citep{vovk2018conformal,hegazy2025valid}, requiring careful separation of learning and certification. Parametric and Bayesian approaches avoid this issue but introduce dependence on model assumptions.

\emph{Multi-agent coupling through conservation.} The conservation constraint $\sum_i x_{b,i}(A) = \sum_j \tilde x_{b,j}$ means that in any given block, reducing one agent's obligation necessarily increases another's. When losses are largely independent, diversification can reduce \emph{tail} exposure for all agents simultaneously; but under dependence, the gains are limited and improving one agent's certified cap often comes at the cost of worsening another's. This coupling between agents distinguishes the problem from single-agent robust optimisation, where the decision-maker can reduce their own uncertainty without affecting others.

\emph{Heavy tails and small samples.} In the motivating applications, block losses are heavy-tailed and the number of observed blocks $B$ is small (tens to low hundreds). Parametric tail models are difficult to validate in this regime making distribution-free guarantees attractive, but at the cost of conservatism as distribution-free certificates from small $B$ can be coarse. Moreover, the quality of these certificates depends on the chosen policy $A$ (since $A$ transforms the loss distribution), creating an interaction between policy selection and certificate tightness.

\subsection{Positioning Relative to Existing Frameworks}
\label{sec:positioning}

\begin{table}[t]
\centering
\caption{Requirements addressed by existing frameworks.
  $\checkmark$ = addressed; $\times$ = not; ${\sim}$ = partially.}
\label{tab:gap}
\footnotesize
\setlength{\tabcolsep}{4pt}
\begin{tabular}{lccc}
\toprule
& \textbf{R1} Tail & \textbf{R2} Effic. & \textbf{R3} Partic. \\
\midrule
Coop.\ games / actuarial\textsuperscript{a}
  & $\times$ & $\checkmark$ & $\checkmark$ \\
Conformal decision-making\textsuperscript{b}
  & $\checkmark$ & $\checkmark$ & $\times$ \\
DRO / scenario approach\textsuperscript{c}
  & $\sim$ & $\checkmark$ & $\times$ \\
\textbf{This work}
  & $\checkmark$ & $\checkmark$ & $\checkmark$ \\
\bottomrule
\end{tabular}
\par\vspace{4pt}
{\scriptsize
\textsuperscript{a}\citet{denuit2022risk,charpentier2025linear,feng2023peer,kocherlakota1996implications}.\;
\textsuperscript{b}\citet{vovk2005algorithmic,johnstone2021conformal,lekeufack2024conformal}.\;
\textsuperscript{c}\citet{campi2011sampling,delage2010distributionally}.}
\end{table}

Table~\ref{tab:gap} summarises which requirements are addressed by existing
frameworks. Conservation (R4) is
enforced structurally and is therefore omitted from the comparison. Among the remaining requirements, no single framework covers all three; the Certified Allocation Problem requires their integration.

\section{Conformal Risk Sharing}
\label{sec:method}

We now present a concrete solution to the Certified Allocation Problem based on split conformal prediction. Our approach combines an interpretable one-parameter linear policy class with distribution-free order-statistic certificates. The base pooling rule $\bar A$ encodes the structure of sharing and can itself be learned from data (Sec.~\ref{sec:protocol}); the scalar $\alpha$ controls the intensity of mutualisation and is selected by grid search subject to certified participation constraints. Alternative solution approaches (Bayesian, parametric) are discussed in Sec.~\ref{sec:discussion}.

\subsection{Policy Class}
\label{sec:policy-class}

We parameterise the allocation as a convex combination of the identity (no
pooling) and a base pooling rule $\bar A \in \mathcal{A}$, controlled by a
single scalar mutualisation level $\alpha \in [0,1]$:
\begin{equation}
  A(\alpha) = (1 - \alpha)\,I + \alpha\,\bar A.
  \label{eq:alpha-policy}
\end{equation}
If both $I$ and $\bar A$ are row-stochastic, so is $A(\alpha)$ for all
$\alpha$, and feasibility~\eqref{eq:feasible} is satisfied by construction.
The policy has a contract-like interpretation: each agent retains a $(1{-}\alpha)$ fraction of their own cost and routes an $\alpha$ fraction through the pooling mechanism. This interpretability is deliberate: in insurance and cost-sharing agreements, policies must be auditable and comprehensible.

The base rule $\bar A$ encodes the \emph{structure} of sharing (who pools with whom) and may be either fixed by design or learned from data. Fixed choices include a uniform pool ($\bar A_{ji} = 1/n$), a locality kernel on a spatial grid, or a sparse neighbourhood rule. Alternatively, $\bar A$ can be estimated from training data; for example, the variance-optimal doubly-stochastic baseline (VO-DS) selects $\bar A$ by minimising a quadratic variance proxy under fairness constraints (Appendix~\ref{sec:VO-DS}). When $\bar A$ is data-driven it is fitted in Stage~1 of Algorithm~\ref{alg:tsc} before $\alpha$ selection. The scalar $\alpha$ then controls the \emph{intensity} of sharing: this separation allows domain experts or data-driven methods to design $\bar A$ while the certification layer tunes $\alpha$ subject to participation constraints. VO-DS targets variance rather than tail risk; within our framework it inherits a distribution-free certification layer.

\subsection{Conformal Certificates}
\label{sec:certificates}

We partition the $B$ observed blocks into training $\mathcal{T}$, validation
$\mathcal{V}$, and calibration $\mathcal{C}$ (with $|\mathcal{C}| = m$). When nonstationarity is a concern, blocks are partitioned in temporal order so that $\mathcal{C}$ is as close as possible to the deployment period (Appendix~\ref{sec:nonstationarity}).
For any \emph{fixed} policy $A$, we construct the per-agent certificates
$\hat c_i(A)$ required by \textbf{R1} as order statistics of the calibration
obligations:
\begin{equation}
\begin{aligned}
      \hat c_i(A) &= k\text{-th order statistic of }
  \{x_{b,i}(A)\}_{b \in \mathcal{C}},\\
  \quad k &= \lceil(m+1)(1-\delta)\rceil.
  \label{eq:conf-caps}
\end{aligned}
\end{equation}
Under block exchangeability between $\mathcal{C}$ and a fresh block,
$P(X_i(A) \le \hat c_i(A)) \ge 1 - \delta$ for each agent $i$
(Theorem~\ref{thm:main}). 
The selection procedure (Sec.~\ref{sec:protocol}) minimises the aggregate certified cost $\Phi(A) = \sum_i w_i\, \hat c_i(A)$ to target efficiency (\textbf{R2}), while the participation constraints \textbf{R3} are enforced by comparing $\hat c(A^\star)$ to the baseline $\hat c(A_0)$ via~\eqref{eq:harm}. If no feasible policy improves on the baseline, the method transparently reverts to $A_0 = I$.

\subsection{Train--Select--Certify Protocol}
\label{sec:protocol}

The non-differentiability of the conformal map $A \mapsto c(A)$ and the need to preserve calibration validity motivate a three-stage protocol
(Algorithm~\ref{alg:tsc}).

\paragraph{Stage 1: Policy-class fitting ($\mathcal{T}$).}
If the base rule $\bar A$ is data-driven (\emph{e.g.}, the variance-optimal doubly-stochastic baseline of Appendix~\ref{sec:VO-DS}), it is estimated on $\mathcal{T}$. When $\bar A$ is fixed by design (\emph{e.g.}, uniform pooling or a predetermined locality kernel), this stage is skipped and $\mathcal{T}$
may be merged into $\mathcal{V}$.

\paragraph{Stage 2: Mutualisation selection ($\mathcal{V}$).}
For each $\alpha$ on a grid over $[0,1]$, we estimate the per-agent risk quantity $\rho_i(A(\alpha))$ by its empirical counterpart on $\mathcal{V}$, hence $\hat\rho_i^{\mathcal{V}}(\alpha)$ is the empirical $(1{-}\delta)$-quantile of agent $i$'s post-sharing obligations $\{x_{b,i}(\alpha)\}_{b \in \mathcal{V}}$. We then compute proxy harm:
\begin{equation}
  \mathrm{Harm}_{\mathrm{proxy}}(\alpha)
  = \sum_{i=1}^n w_i \big(\hat\rho_i^{\mathcal{V}}(\alpha)
    - \hat\rho_i^{\mathcal{V}}(0) - \eta\big)_+.
  \label{eq:proxy-harm}
\end{equation}
Here $\hat\rho_i^{\mathcal{V}}(0)$ denotes the baseline empirical risk evaluated at $\alpha=0$ (the identity policy $A_0=I$, under which each agent bears their own cost). We retain the $\alpha^\star$ minimising $\sum_i w_i\, \hat\rho_i^{\mathcal{V}}(\alpha)$ subject to $\mathrm{Harm}_{\mathrm{proxy}}(\alpha) \le H_{\mathrm{proxy}}$, where $H_{\mathrm{proxy}} = \varepsilon\,\Phi_{\mathcal{V}}(A_0)$ is the proxy analogue of the harm budget~\eqref{eq:harm} with $\Phi(\cdot)$ as in~\eqref{eq:efficiency}. No certification claims are made at this stage.

\paragraph{Stage 3: One-shot certification ($\mathcal{C}$).}
We fix $A^\star = A(\alpha^\star)$, compute conformal caps $c(A^\star)$ and $c(A_0)$ on the untouched calibration set via~\eqref{eq:conf-caps}, and evaluate the certified harm constraint~\eqref{eq:harm}. If the constraint holds, $A^\star$ is deployed; otherwise the operator reverts to $A_0 = I$. This is an operational audit, not a statistical selection step: the certificate remains valid regardless of the deployment decision.

\begin{algorithm}[t]
\caption{Conformal Risk Sharing: Train--Select--Certify}
\label{alg:tsc}
\KwIn{Blocks $\{\tilde x_b\}_{b=1}^B$; base rule $\bar A$
  (or specification to learn it); miscoverage $\delta$;
  weights $w$; tolerance $\eta$; budget fraction $\varepsilon$.}
\KwOut{Deployed policy $A_{\mathrm{op}}$ and certificates
  $c(A_{\mathrm{op}})$.}
\BlankLine
Partition blocks into $\mathcal{T}$, $\mathcal{V}$, $\mathcal{C}$\;
\BlankLine
\tcp{\color{gray} \small Stage 1: Fit policy class on $\mathcal{T}$ (skip if $\bar A$ is fixed)}
\BlankLine
\tcp{\color{gray} \small Stage 2: Select $\alpha$ on $\mathcal{V}$}
\For{$\alpha$ in grid over $[0,1]$}{
  Compute proxy risk $\hat\rho_i^{\mathcal{V}}(\alpha)$ and
    proxy harm~\eqref{eq:proxy-harm}\;
}
$\alpha^\star \leftarrow$ best feasible $\alpha$
  (lowest $\sum_i w_i\,\hat\rho_i^{\mathcal{V}}(\alpha)$,
   proxy harm $\le$ budget)\;
$A^\star \leftarrow (1-\alpha^\star)I + \alpha^\star \bar A$\;
\BlankLine
\tcp{\color{gray} \small Stage 3: Certify on $\mathcal{C}$}
$\hat c_0 \leftarrow \mathrm{ConfCaps}(\mathcal{C},\, I,\, \delta)$\;
$\hat c^\star \leftarrow \mathrm{ConfCaps}(\mathcal{C},\, A^\star,\, \delta)$\;
$\mathrm{Harm} \leftarrow \sum_i w_i
  (\hat c_i^\star - \hat c_{0,i} - \eta)_+$\;
\BlankLine
\tcp{\color{gray} \small Operational audit}
\eIf{$\mathrm{Harm} \le \varepsilon \langle w, \hat c_0 \rangle$}{
  \Return $(A^\star,\, \hat c^\star)$
    \tcp*{\color{gray} \small Deploy candidate}
}{
  \Return $(I,\, \hat c_0)$
    \tcp*{\color{gray} \small Revert to baseline}
}
\end{algorithm}

\paragraph{What is (and is not) guaranteed.}
The certificate~\eqref{eq:conf-caps} provides a \emph{marginal}, per-agent guarantee: each $\hat c_i(A^\star)$ controls agent $i$'s obligation with probability at least $1{-}\delta$ over a fresh exchangeable block. It does \emph{not} imply the joint statement $P(\forall i: X_i(A^\star) \le \hat c_i(A^\star)) \ge 1{-}\delta$; for system-level solvency we provide a separate certificate (Corollary~\ref{cor:system}). The calibration set $\mathcal{C}$ is used exactly once for a pre-specified $A^\star$; the subsequent deployment decision does not invalidate the certificate.

\subsection{Rare-Event Degeneracy}
\label{sec:degeneracy}

In zero-inflated data (where many peers have no triggers in the calibration window), the baseline cap $\hat c_{0,i}$ can be zero, causing the harm budget $H = \varepsilon \langle w, \hat c_0 \rangle$ to vanish and the acceptability gate to reject any pooling. To prevent this, we impose a minimum capital floor $c_{\min} > 0$, replacing both candidate and baseline by $\max\{\hat c_i(A),\, c_{\min}\}$ componentwise. Since the floor only increases caps, coverage validity is preserved. In all experiments we set $c_{\min}$ to a small fraction (1\%) of the median nonzero loss, treating it as an operational minimum capital requirement: no agent should plan with zero reserves against tail events.

\subsection{Guarantees}
\label{sec:guarantees}

\begin{theorem}[Per-agent tail certificate]
\label{thm:main}
Fix $i\in\{1,\dots,n\}$ and $\delta\in(0,1)$. Let $\mathcal{T},\mathcal{V}, \mathcal{C}$ partition the observed blocks with $|\mathcal{C}|=m$, and let $A^\star$ be any allocation constructed from the training and validation blocks $\{\tilde X_b\}_{b\in\mathcal{T}\cup\mathcal{V}}$ (and, optionally, algorithmic randomness independent of the data), using no information from $\mathcal{C}$. Assume that the calibration blocks $\{\tilde X_b\}_{b\in\mathcal{C}}$ together with a fresh block $\tilde X_{\mathrm{new}}$ are exchangeable, conditional on $A^\star$.\footnote{This holds if $(\tilde X_1,\dots,\tilde X_B,\tilde X_{\mathrm{new}})$ are exchangeable at the block level and $A^\star$ uses no information from $\mathcal{C}$.} Define $\hat c_i(A^\star)$ as the $k$-th order statistic of $\{X_{b,i}(A^\star)\}_{b\in\mathcal{C}}$ with $k=\lceil(m+1)(1-\delta)\rceil$. Then
\begin{equation}
  P\big(X_{\mathrm{new},i}(A^\star) \le \hat c_i(A^\star)\big)
  \;\ge\; 1 - \delta.
  \label{eq:main-guarantee}
\end{equation}
\end{theorem}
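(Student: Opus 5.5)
The plan is to reduce the claim to the standard split-conformal rank argument, applied conditionally on $A^\star$. First I condition on $\mathcal{F} := \sigma\big(\{\tilde X_b\}_{b\in\mathcal{T}\cup\mathcal{V}}, U\big)$, where $U$ is the optional algorithmic randomness. Given $\mathcal{F}$, the matrix $A^\star$ is fixed, and by assumption the $m+1$ blocks $\{\tilde X_b\}_{b\in\mathcal{C}}\cup\{\tilde X_{\mathrm{new}}\}$ are exchangeable. Define the scalar scores $S_b := X_{b,i}(A^\star) = \sum_j \tilde X_{b,j} A^\star_{ji}$ for $b\in\mathcal{C}\cup\{\mathrm{new}\}$. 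Conditionally on $\mathcal{F}$, each $S_b$ is the same deterministic measurable function of its own block $\tilde X_b$. A fixed function applied coordinatewise preserves exchangeability, so $(S_b)_{b\in\mathcal{C}}, S_{\mathrm{new}}$ are conditionally exchangeable. Nothing here uses linearity or row-stochasticity of $A^\star$; it only matters that the map from $\tilde X_b$ to $S_b$ does not depend on $\mathcal{C}$.

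The second step is the quantile lemma. Let $k=\lceil (m+1)(1-\delta)\rceil$. If $k>m$, set $\hat c_i=+\infty$ by convention, and the claim is trivial. Otherwise I show that $S_{\mathrm{new}} > S_{(k)}$, the $k$-th order statistic of the calibration scores, implies that $S_{\mathrm{new}}$ is strictly larger than at least $k$ of the $m+1$ values. That is, its rank among all $m+1$ scores, counted with ties resolved in its favour, is at least $k+1$. To handle ties cleanly I use the event formulation directly: $S_{\mathrm{new}}\le S_{(k)}$ holds if and only if $\#\{b\in\mathcal{C}: S_b < S_{\mathrm{new}}\} \le k-1$. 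Let $R$ be the number of indices among all $m+1$ whose score is strictly less than $S_{\mathrm{new}}$. By exchangeability, $S_{\mathrm{new}}$ is equally likely to sit in any position of a uniformly random tie-broken ordering. Hence $P(R\ge k \mid \mathcal{F}) \le (m+1-k)/(m+1)$; this can be made rigorous by averaging the indicator over the $m+1$ permutations that swap $\mathrm{new}$ with each calibration index. It follows that $P(S_{\mathrm{new}}\le \hat c_i \mid \mathcal{F}) \ge k/(m+1) \ge 1-\delta$.

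Finally, I take expectations over $\mathcal{F}$ (tower property) to obtain the unconditional bound~\eqref{eq:main-guarantee}. I expect the main obstacle to be the tie-handling in the rank argument, since heavy zero-inflation makes ties common here. The averaging step handles this by showing that, among the $m+1$ exchangeable values, at most $m+1-k$ indices can exceed the $k$-th smallest of the other $m$. This is a deterministic counting fact, and summing it over swaps gives the bound without any continuity assumption. I would also remark that the subsequent audit and reversion decision does not enter the argument, because the statement concerns $\hat c_i(A^\star)$ for the pre-specified $A^\star$. The capital floor of Sec.~\ref{sec:degeneracy} only enlarges $\hat c_i$, so coverage is preserved.
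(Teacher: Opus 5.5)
Your proposal is correct and follows the paper's proof. You condition on the selected policy, note that the scalar scores $X_{b,i}(A^\star)$ of the calibration and fresh blocks are exchangeable, bound the exceedance probability by $(m+1-k)/(m+1)\le\delta$ with a rank argument, and integrate out the conditioning. Two of your additions improve on the paper: the deterministic count (at most $m+1-k$ indices can strictly exceed the $k$-th smallest of the other $m$ values), averaged over swaps, handles ties more cleanly than the paper's random tie-breaking footnote, and the $k>m$ convention fills a small omission.

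One small point: the theorem assumes exchangeability conditional on $A^\star$, not on your finer $\mathcal{F}$. Either condition on $A^\star$ directly, which suffices because the score map depends only on $A^\star$, or justify conditioning on $\mathcal{F}$ via the footnote's block-level exchangeability.
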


\begin{proof}
Conditioning on $A^\star$, the $m+1$ values
$\{X_{b,i}(A^\star)\}_{b\in\mathcal{C}} \cup
\{X_{\mathrm{new},i}(A^\star)\}$ are exchangeable by assumption and the rank of $X_{\mathrm{new},i}(A^\star)$ among all $m+1$ values is uniform on $\{1,\dots,m+1\}$.\footnote{When ties occur, the rank is defined by uniform random tie-breaking; without tie-breaking the bound still holds since ties increase $\hat c_i(A^\star)$.} Since $\hat c_i(A^\star)$ is the $k$-th smallest of the $m$ calibration values, at most $m - k$ of the $m+1$ exchangeable values can exceed it. Hence $X_{\mathrm{new},i}(A^\star)$ exceeds $\hat c_i(A^\star)$ with probability at most $(m-k+1)/(m+1) \le \delta$, giving $P(X_{\mathrm{new},i}(A^\star) \le \hat c_i(A^\star) \mid A^\star) \ge 1-\delta$ for any fixed $A^\star$, implying~\eqref{eq:main-guarantee}.
\end{proof}

Theorem~\ref{thm:main} provides \textbf{R1} of the Certified Allocation Problem. Two remarks clarify the scope: \emph{(i) Marginal, not joint.} The guarantee~\eqref{eq:main-guarantee} is per-agent and does not imply the simultaneous statement $P(\forall i: X_i(A^\star) \le \hat c_i(A^\star)) \ge 1-\delta$. For system-level solvency, Corollary~\ref{cor:system} provides a separate certificate.
\emph{(ii) Operational audit.} The subsequent decision to deploy $A^\star$ or revert to $A_0$ based on the harm budget does not invalidate the certificate, since $A^\star$ was fixed before observing $\mathcal{C}$. The deploy/revert step is a governance decision informed by a valid statistical audit.

\begin{corollary}[System-level certificate]
\label{cor:system}
Under the assumptions of Theorem~\ref{thm:main}, let
$g:\mathbb{R}^n_+\to\mathbb{R}$ be any scalar functional (e.g.,
$g(x)=\sum_i x_i$ for aggregate cost or $g(x)=\max_i x_i$ for the largest realised obligation across agents) fixed prior to observing $\mathcal{C}$. Let $\Gamma(A^\star)$ be the $k$-th order statistic of $\{g(X_b(A^\star))\}_{b\in\mathcal{C}}$ with $k=\lceil(m+1)(1-\delta)\rceil$. Then
\begin{equation}
  P\big(g(X_{\mathrm{new}}(A^\star)) \le \Gamma(A^\star)\big)
  \;\ge\; 1 - \delta.
  \label{eq:system-guarantee}
\end{equation}
\end{corollary}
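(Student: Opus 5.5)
The plan is to reduce Corollary~\ref{cor:system} to the same rank argument as Theorem~\ref{thm:main}, applied to a scalar score in place of agent $i$'s coordinate. For each block $b\in\mathcal{C}\cup\{\mathrm{new}\}$, define the score
\[
S_b := g\big(X_b(A^\star)\big) = g\big(\tilde X_b A^\star\big).
\]
The key observation is that, conditional on $A^\star$, the map $\tilde x \mapsto g(\tilde x A^\star)$ is a fixed measurable function applied blockwise. This holds because $g$ is chosen before $\mathcal{C}$ is seen, and $A^\star$ depends only on $\mathcal{T}\cup\mathcal{V}$ together with independent randomness.

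The first step is to use this to transfer exchangeability. Applying a common fixed function to each element of an exchangeable sequence preserves exchangeability. Hence, conditional on $A^\star$, the $m+1$ scores $\{S_b\}_{b\in\mathcal{C}}\cup\{S_{\mathrm{new}}\}$ are exchangeable, by the assumption of Theorem~\ref{thm:main}.

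The second step repeats the order-statistic argument. Break ties uniformly at random, so that the rank of $S_{\mathrm{new}}$ among the $m+1$ scores is uniform on $\{1,\dots,m+1\}$. The event $S_{\mathrm{new}}>\Gamma(A^\star)$, where $\Gamma(A^\star)$ is the $k$-th smallest calibration score, can only occur if $S_{\mathrm{new}}$ has rank at least $k+1$. This gives
\[
P\big(S_{\mathrm{new}}>\Gamma(A^\star)\mid A^\star\big)\le \frac{m+1-k}{m+1}\le\delta,
\]
where the last inequality uses $k=\lceil (m+1)(1-\delta)\rceil$. Ties only enlarge $\Gamma(A^\star)$ relative to the tie-broken version, so the bound still holds without tie-breaking, as in the footnote to Theorem~\ref{thm:main}. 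If $k>m$, set $\Gamma(A^\star)=+\infty$, and the statement is trivial. Integrating over $A^\star$ then removes the conditioning and yields \eqref{eq:system-guarantee}.

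The only step requiring care is the first: verifying that $g$ and $A^\star$ do not depend on $\mathcal{C}$. If $g$ were chosen after inspecting the calibration data, the scores would no longer be a fixed transformation of the blocks, and exchangeability could fail. Once this is confirmed, the result is literally Theorem~\ref{thm:main} with $X_{b,i}(A^\star)$ replaced by $g(X_b(A^\star))$. One could even phrase it as an application of the theorem's proof to a one-agent problem with obligation $g(\tilde x A^\star)$. Measurability of $g$ is implicitly assumed.
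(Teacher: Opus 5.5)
Your proposal is correct and follows the paper's proof, which simply reruns the rank argument of Theorem~\ref{thm:main} on the scalar scores $S_b = g(X_b(A^\star))$. Your extra remarks (exchangeability preserved under a fixed blockwise map, the $+\infty$ convention when $k>m$, and the requirement that $g$ not depend on $\mathcal{C}$) fill in details the paper leaves implicit.
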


The proof is identical to Theorem~\ref{thm:main}, applied to the scalar scores
$S_b = g(X_b(A^\star))$.

\section{Experiments}
\label{sec:experiments}

We evaluate Conformal Risk Sharing on synthetic heavy-tailed data (Sec.~\ref{sec:exp-synth}), gridded precipitation losses (Sec.~\ref{sec:exp-eobs}), and electricity consumption of an energy cooperative (Sec.~\ref{sec:exp-energy})\footnote{Code available at \url{https://github.com/IevaKazlauskaite/conformal-risk-sharing}}.  In each case we ask whether: (i)~the conformal certificates maintain valid coverage (\textbf{R1})? (ii)~the learned policy reduces aggregate certified tail exposure (\textbf{R2})? (iii)~the certified harm is controlled within the participation budget (\textbf{R3})?

\subsection{Synthetic Data}
\label{sec:exp-synth}

We generate $B$ i.i.d.\ blocks (years) with $n$ peers on a grid. Each block is an event year with probability $p_{\mathrm{event}}$; on event years, peer losses are products of a heavy-tailed (Pareto) year severity, peer-specific lognormal exposures, and spatially correlated hit indicators drawn from a logistic model. The resulting loss vectors are nonnegative, heavy-tailed, and strongly zero-inflated (Appendix~\ref{app:synth-dgp}). We compare three base rules: \emph{global-uniform} pooling ($\bar A = \mathbf{1}\mathbf{1}^\top/n$), \emph{local} pooling (neighbourhood averaging on the grid), and a data-driven \emph{variance-optimal doubly-stochastic} baseline (VO-DS; Appendix~\ref{sec:VO-DS}). All experiments use $\delta = 0.10$
(nominal 90\%), $\eta = 0$, and $\varepsilon = 0.20$.

We report: (a)~the empirical per-agent marginal coverage aggregated across all test blocks and splits (mean and 5th percentile across peers); (b)~\emph{AggCapRatio} $= \langle w, c_{\mathrm{op}}\rangle / \langle w, c_0 \rangle$, measuring aggregate certified-cap reduction (lower is better); (c)~\emph{Top10 cap}, the cap ratio restricted to the top decile of agents by baseline cap (targeting relief for the highest-risk agents); and (d)~\emph{PASS rate}, the fraction of splits where the candidate survives the certified acceptability audit. All coverage and cap metrics are reported for the \emph{operational} policy $A_{\mathrm{op}}$, which equals the candidate $A^\star$ on PASS splits and reverts to identity on FAIL splits. Metrics are defined in Appendix~\ref{app:synth-metrics}.

Table~\ref{tab:synth_random_oper_identity} reports results under random splits. Empirical per-agent coverage is near nominal across pooling families and identity (mean $\approx 0.91$; 5th percentiles at or above $0.90$), consistent with the intended conformal validity guarantee (\textbf{R1}).
Global pooling achieves the largest certified relief for high-risk agents (Top10 ratio $0.907$, \emph{i.e.}, ${\approx}9\%$ tail capital reduction), while local pooling delivers more modest gains (Top10 $0.939$) (\textbf{R2}). The PASS rate of $0.77$ for both families indicates that the certification gate rejects roughly one quarter of candidate policies, confirming that the participation constraint is active (\textbf{R3}). The relatively large standard deviations on the cap ratios reflect split-level variability: on FAIL splits the method reverts to identity (ratio $= 1$), while
on PASS splits the candidate delivers meaningful relief.

Table~\ref{tab:synth_time_oper_identity} repeats the analysis under
time-ordered splits. Coverage degrades slightly (p05 drops to $0.884$), but identity degrades identically, confirming a nonstationarity effect rather than a method failure. The PASS rate increases to $1.00$ because the method selects conservative mutualisation levels under temporal ordering, easily fitting within
the harm budget. Certified relief is correspondingly modest. 

\begin{table}[t]
\centering
\setlength{\tabcolsep}{3pt}
\scriptsize
\begin{tabular}{lccccc}
\toprule
 & PASS & $\alpha_{\mathrm{op}}$ & Cov (p05) & AggCap & Top10 \\
\midrule
Global & 0.77 & 0.14\tiny$\pm$0.29 & 0.910 (0.900) & 0.970\tiny$\pm$0.070 & 0.907\tiny$\pm$0.198 \\
Local & 0.77 & 0.10\tiny$\pm$0.24 & 0.909 (0.897) & 0.984\tiny$\pm$0.044 & 0.939\tiny$\pm$0.148 \\
VO-DS & 0.77 & 0.14\tiny$\pm$0.29 & 0.910 (0.900) & 0.970\tiny$\pm$0.070 & 0.907\tiny$\pm$0.198 \\
Identity & -- & 0 & 0.910 (0.899) & -- & -- \\
\bottomrule
\end{tabular}
\caption{Synthetic, random splits ($\delta\!=\!0.1$, $\eta\!=\!0$,
$\varepsilon\!=\!0.2$, $n_C\!=\!100$, 100 splits). Cov: mean per-agent marginal coverage ($5^{\text{th}}$ percentile). AggCap/Top10: certified-cap ratios vs.\ identity (lower = better).}
\label{tab:synth_random_oper_identity}
\end{table}

\begin{table}[t]
\centering
\setlength{\tabcolsep}{3pt}
\scriptsize
\begin{tabular}{lccccc}
\toprule
 & PASS & $\alpha_{\mathrm{op}}$ & Cov (p05) & AggCap & Top10 \\
\midrule
Global & 1.00 & 0.08\tiny$\pm$0.26 & 0.910 (0.884) & 0.978\tiny$\pm$0.073 & 0.946\tiny$\pm$0.179 \\
Local & 1.00 & 0.05\tiny$\pm$0.17 & 0.910 (0.883) & 0.988\tiny$\pm$0.039 & 0.968\tiny$\pm$0.107 \\
VO-DS & 1.00 & 0.08\tiny$\pm$0.26 & 0.910 (0.884) & 0.978\tiny$\pm$0.073 & 0.946\tiny$\pm$0.179 \\
Identity & -- & 0 & 0.911 (0.884) & -- & -- \\
\bottomrule
\end{tabular}
\caption{Synthetic, time-ordered splits ($\delta\!=\!0.1$, $\eta\!=\!0$,
$\varepsilon\!=\!0.2$, $n_C\!=\!100$, 12 splits).}
\label{tab:synth_time_oper_identity}
\end{table}

\subsection{E-OBS Precipitation}
\label{sec:exp-eobs}

We use E-OBS daily gridded rainfall over a Central European region (lat
$40$--$50^\circ$, lon $5$--$12^\circ$), with $n = 1120$ grid cells observed over $B = 75$ annual blocks (1950 -2024)~\citep{EOBS,cornes2018ensemble}. To
mirror a parametric insurance design, we define a binary cold-season trigger: a unit payout is recorded whenever the Oct--Mar rainfall total exceeds a threshold $u = 40$ (chosen to yield a median per-cell annual trigger rate of $\approx 11\%$, with substantial cross-cell heterogeneity). The resulting block losses are heavy-tailed and highly zero-inflated (zero fraction $\approx 0.78$), with substantial spatial dependence. We use $\delta = 0.10$, $\eta = 0$, $\varepsilon = 0.20$ and report results under 50 random splits with $n_C = 35$ calibration and $n_{\mathrm{test}} = 5$ test blocks per split.

Table~\ref{tab:eobs_random} reports coverage and utility for global-uniform and local pooling. Both families pass the certified audit in nearly all splits (PASS $\ge 0.98$), confirming that the harm budget is not overly restrictive. Global pooling delivers substantial certified relief: a 27\% reduction in aggregate certified caps (AggCap 0.726) and nearly 50\% reduction for the highest-risk decile (Top10 0.506), demonstrating strong performance on \textbf{R2}. Coverage remains near-nominal (mean 0.919, p05 0.896), though closer to the boundary than identity, reflecting the cost of redistributing tail exposure. Local pooling is more conservative: it achieves moderate relief (AggCap 0.895, Top10 0.695) with higher empirical coverage (mean 0.946, p05 0.904), illustrating the safety-utility trade-off inherent in the Certified Allocation Problem. Broader pooling extracts more diversification but pushes coverage closer to nominal for some agents; local pooling sacrifices efficiency for empirical conservatism under spatial dependence.

To stress-test robustness to temporal drift, we repeat the analysis with time-ordered splits and vary the calibration length
$n_C \in \{10, 20, 30, 40, 50\}$ (Table~\ref{tab:eobs_time_nc}). Blocks are partitioned temporally (Appendix~\ref{sec:nonstationarity}). For global
pooling, increasing $n_C$ worsens out-of-window coverage (mean drops from 0.868 to 0.742; p05 from 0.745 to 0.564), consistent with a bias-variance trade-off under drift: larger calibration windows reduce quantile variance but incorporate data from earlier, less representative periods. Local pooling is more robust, maintaining higher coverage across all $n_C$.
Importantly, identity caps also degrade under time splits (Appendix~\ref{app:eobs-identity}), confirming that this is a genuine nonstationarity effect rather than a method failure: over 1950--2024, both trigger rates and conditional severity exhibit statistically significant upward trends, with aggregate losses increasing by roughly $50\%$ between the first and second halves of the record (Appendix~\ref{app:eobs-diagnostics}). 
Hence, conformal certificates may require periodic re-certification.

\begin{table}[t]
\centering
\setlength{\tabcolsep}{3pt}
\scriptsize
\begin{tabular}{lccccc}
\toprule
 & PASS & $\alpha_{\mathrm{op}}$ & Cov (p05) & AggCap & Top10 \\
\midrule
Global & 0.98 & 0.56\tiny$\pm$0.10 & 0.919 (0.896) & 0.726\tiny$\pm$0.047 & 0.506\tiny$\pm$0.087 \\
Local & 1.00 & 1.00\tiny$\pm$0.00 & 0.946 (0.904) & 0.895\tiny$\pm$0.012 & 0.695\tiny$\pm$0.022 \\
VO-DS & 0.98 & 0.57\tiny$\pm$0.10 & 0.920 (0.900) & 0.718\tiny$\pm$0.047 & 0.493\tiny$\pm$0.088 \\
Identity & -- & 0 & 0.972 (0.932) & -- & -- \\
\bottomrule
\end{tabular}
\caption{E-OBS precipitation, random splits ($n\!=\!1120$ peers,
$B\!=\!75$ years, $\delta\!=\!0.1$, $\varepsilon\!=\!0.2$, $n_C\!=\!35$,
50 splits).}
\label{tab:eobs_random}
\end{table}

\begin{table}[t]
\centering
\setlength{\tabcolsep}{3pt}
\scriptsize
\begin{tabular}{llccccc}
\toprule
 & $n_C$ & PASS & Mean & p05 & Min & Fr$<$.9 \\
\midrule
\multirow{5}{*}{\rotatebox{90}{\tiny Global}} & 10 & 0.91 & 0.868 & 0.745 & 0.600 & 0.836 \\
 & 20 & 0.91 & 0.800 & 0.709 & 0.509 & 0.796 \\
 & 30 & 1.00 & 0.756 & 0.600 & 0.491 & 0.874 \\
 & 40 & 1.00 & 0.762 & 0.600 & 0.400 & 0.820 \\
 & 50 & 1.00 & 0.742 & 0.564 & 0.345 & 0.854 \\
\midrule
\multirow{5}{*}{\rotatebox{90}{\tiny Local}} & 10 & 1.00 & 0.919 & 0.782 & 0.545 & 0.333 \\
 & 20 & 1.00 & 0.908 & 0.727 & 0.545 & 0.383 \\
 & 30 & 1.00 & 0.893 & 0.691 & 0.527 & 0.411 \\
 & 40 & 1.00 & 0.882 & 0.618 & 0.473 & 0.429 \\
 & 50 & 1.00 & 0.868 & 0.600 & 0.382 & 0.466 \\
\bottomrule
\end{tabular}
\caption{E-OBS time-ordered splits, varying $n_C$ ($n_{\mathrm{test}}\!=\!5$). Fr$<$.9: fraction of agents below nominal.}
\label{tab:eobs_time_nc}
\end{table}

\subsection{Energy Cooperative}
\label{sec:exp-energy}

To demonstrate generality beyond climate insurance, we apply the framework to an electricity consumption dataset from a Portuguese energy cooperative (CEL Loureiro) comprising $n = 153$ households observed over $B = 69$ weekly blocks (May 2022--September 2023) \citep{monteiro2024electricity}. We define each household's weekly loss as the excess consumption above a rolling seasonal baseline (9-week centered median), removing seasonal effects so that blocks are approximately exchangeable. The resulting losses are continuous, heavy-tailed, 
and moderately zero-inflated (zero fraction $0.58$). Unlike the E-OBS precipitation data, pairwise correlations across households are weak and unstructured (Appendix~\ref{app:energy-dependence}), reflecting largely idiosyncratic demand shocks. The observation window $B$ is too short for meaningful time-ordered splits in our split configuration, and we find no significant nonstationarity in the deseasonalised losses, so random splits are appropriate.
We report results under 50 splits with $n_C = 35$, $n_{\mathrm{test}} = 5$, and global-uniform pooling at two participation budgets.

Table~\ref{tab:energy_random} reports results for $\varepsilon = 0.05$ (tight budget) and $\varepsilon = 0.20$ (permissive). Coverage is near-nominal in both cases, consistent with the identity baseline (\textbf{R1}). The participation budget $\varepsilon$ directly controls the deployed mutualisation level: at $\varepsilon = 0.05$, the method selects $\alpha_\mathrm{op} = 0.31$ and delivers a 20\% aggregate cap reduction (AggCap $0.795$; Top10 ratio $0.719$); at $\varepsilon = 0.20$, it selects $\alpha_\mathrm{op} = 0.93$ and delivers a 51\% aggregate reduction with 82\% reduction in certified caps for the highest-demand decile (Top10 $0.180$) (\textbf{R2}). The PASS rate is $1.00$ at both levels, indicating that pooling benefits nearly all households simultaneously: the weak dependence structure means global averaging rarely harms any individual agent, so the harm constraint is slack (\textbf{R3}). This contrasts with the E-OBS setting, where spatially structured correlations cause pooling to harm some agents and the certification gate rejects a fraction of candidates. In all datasets, VO-DS produces a base rule close to global, reflecting the approximately symmetric dependence structure (Appendix~\ref{sec:VO-DS}). Additional sensitivity analyses over the participation budget $\varepsilon$ and target miscoverage level $\delta$ are reported in Appendix~\ref{app:energy-sensitivity}. The results exhibit the expected monotone efficiency–conservatism tradeoffs while maintaining near-nominal coverage.

\begin{table}[t]
\centering
\setlength{\tabcolsep}{3pt}
\scriptsize
\begin{tabular}{lccccc}
\toprule
 & PASS & $\alpha_{\mathrm{op}}$ & Cov (p05) & AggCap & Top10 \\
\midrule
Global ($\varepsilon\!=\!0.05$) & 1.00 & 0.31\tiny$\pm$0.08 & 0.914 (0.886) & 0.795\tiny$\pm$0.048 & 0.719\tiny$\pm$0.070 \\
Global ($\varepsilon\!=\!0.20$) & 1.00 & 0.93\tiny$\pm$0.10 & 0.914 (0.900) & 0.492\tiny$\pm$0.058 & 0.180\tiny$\pm$0.079 \\
VO-DS ($\varepsilon\!=\!0.05$) & 1.00 & 0.32\tiny$\pm$0.08 & 0.913 (0.882) & 0.787\tiny$\pm$0.048 & 0.708\tiny$\pm$0.070 \\
VO-DS ($\varepsilon\!=\!0.20$) & 1.00 & 0.94\tiny$\pm$0.09 & 0.915 (0.900) & 0.490\tiny$\pm$0.057 & 0.175\tiny$\pm$0.075 \\
Identity & -- & 0 & 0.914 (0.884) & -- & -- \\
\bottomrule
\end{tabular}
\caption{Energy cooperative, random splits ($n\!=\!153$ households,
$B\!=\!69$ weeks, $\delta\!=\!0.1$, $\eta\!=\!0$, $n_C\!=\!35$, 50 splits).
Participation budgets $\varepsilon$ illustrate efficiency-harm
trade-off.}
\label{tab:energy_random}
\end{table}

\section{Discussion and Conclusion}
\label{sec:discussion}

We introduced the Certified Allocation Problem, requiring joint policy selection, per-agent obligation caps, and participation verification from finite data. We proposed Conformal Risk Sharing as the first solution framework, demonstrating on synthetic and real data that it delivers substantial tail relief for high-risk peers, and harm control within explicit participation budgets.
The conformal certificate provides marginal, per-agent control of tail exceedance under block exchangeability. This yields finite-sample validity without parametric assumptions, but can be conservative when calibration blocks are few. Additionally, the guarantee is tied to block exchangeability; in practice this motivates periodic re-certification as new blocks arrive.

\paragraph{Limitations.}
The one-parameter policy class is interpretable and auditable but limits expressiveness. The conformal guarantee is marginal per agent and does not imply joint coverage across agents. Corollary~\ref{cor:system} provides a separate system-level certificate for user-chosen system-level functionals. Under strong dependence, participation constraints may be infeasible; in such cases the method transparently reports that material improvement is unattainable within the tested policy class at the requested safety level. Finally, nonlinear mechanisms (\emph{e.g.}, deductibles) may be more capital-efficient in some regimes but are not covered by the current linear model.

\paragraph{Future work.}
The Certified Allocation Problem admits solutions beyond conformal prediction. A Bayesian approach could regularise estimation in the small-sample regime through informative priors, but the resulting certificates are credible intervals whose coverage depends on correct prior specification rather than holding distribution-free as in the conformal case. A hierarchical Bayesian model across agents could exploit the spatial/cooperative structure (shared hyperparameters across agents) to produce tighter per-agent certificates.
Parametric approaches (\emph{e.g.}, extreme value theory, copula models) could estimate tail quantities under distributional assumptions, providing efficiency when models are well-specified but lacking distribution-free validity. 
Investigating these alternatives is a direction for future
work.

Several extensions of the framework merit investigation.
\emph{Richer policy classes:} parameterising the base rule $\bar A$ itself (\emph{e.g.}, via a learned kernel bandwidth or sparse graph weights) would move from a scalar to a low-dimensional search while preserving interpretability, and the conformal certificate remains valid for any policy fixed before calibration. Operationally, this may include layered insurance rules with deductibles, and cashback thresholds parameterised by a small number of tunable parameters.
\emph{Feature-conditional certificates:} replacing unconditional conformal caps with conformalised quantile regression \citep{romano2019conformalized} could yield tighter, covariate-adaptive certificates when side information (climate indices, exposure features) is available. 
\emph{Dynamic and multiperiod settings:} extending the framework to sequential certification, where the calibration window rolls forward and agents may enter or leave the pool, connects to online conformal prediction \citep{oliveira2024split} and multiperiod P2P insurance models \citep{abdikerimova2024multiperiod}.


\bibliography{ref}

\newpage

\onecolumn

\title{Conformal Risk Sharing: Certified Cost Allocation with Participation Guarantees \\(Supplementary Material)}
\maketitle

\appendix

\section{P2P Parametric Insurance: Background and Motivation}
\label{sec:p2p_parametric_appendix}

This appendix provides additional context on the peer-to-peer parametric insurance application that serves as one of the motivating examples throughout the paper.

\paragraph{Parametric insurance.}
A parametric insurance product triggers a fixed payout when an independently verifiable index crosses a predetermined threshold (\emph{e.g.}, rainfall exceeding a level, wind speed above a limit, a seismic intensity measure, or a commodity price index, all of which can be verified by independent institutions/providers), rather than indemnifying individually assessed losses~\citep{SwissRe}. This design reduces administrative overhead, eliminates the need for claims adjustment, and enables rapid, transparent settlement. However, sustainability still depends on effective risk pooling, which is challenging when trigger events are correlated across policyholders and when the underlying hazard distribution may shift over time.

\paragraph{Peer-to-peer risk sharing.}
Peer-to-peer (P2P) insurance decentralises risk pooling: participants collectively fund losses within a network rather than transferring risk to a corporate insurer in exchange for premiums \citep{feng2023decentralized,stoeckli2018exploring}. The model distributes financial responsibility across participants, reducing overhead and aligning incentives (members share surpluses via cashback and bear shortfalls collectively). This structure is particularly relevant in settings where traditional insurers have withdrawn due to high risk or where premiums are prohibitively high, leaving protection gaps that decentralised pools can partially fill.

\paragraph{What makes P2P mechanism design hard.}
Traditional insurance relies on centralised capital: reserves and reinsurance absorb tail outcomes, and the law of large numbers makes per-policy costs predictable at scale. P2P schemes are typically \emph{capital-light}, with limited central reserves and continued reliance on a heterogeneous membership. This creates a distinct design constraint: the mechanism must not only be balanced in expectation, but must also control extreme contribution outcomes. Three challenges are particularly salient:

\begin{itemize}[leftmargin=*,itemsep=2pt]
\item \emph{Heavy tails and dependence.} Trigger events can produce correlated payouts across many members simultaneously (\emph{e.g.}, a regional storm or a market-wide shock), concentrating burden and limiting the diversification benefit of pooling.
\item \emph{Voluntary participation.} Members who perceive their worst-case obligation as too high will leave, and selective exit by low-risk members can unravel the pool through adverse selection \citep{kocherlakota1996implications,ligon2005formation}.
\item \emph{Fairness and transparency.} If the mechanism systematically shifts tail burden onto a subset of members to subsidise others, those members exit first, undermining the pool both ethically and practically.
\end{itemize}

These challenges motivate the Certified Allocation Problem formalised in Sec.~\ref{sec:problem}: the mechanism designer needs per-agent tail certificates (\textbf{R1}) to assure participants of bounded exposure, aggregate efficiency (\textbf{R2}) to justify the existence of the pool, and explicit participation constraints (\textbf{R3}) to prevent the adverse selection spiral.

\paragraph{Connection to our framework.}
In our experiments (Sec.~\ref{sec:exp-eobs}), we instantiate this setting using gridded precipitation data with a parametric seasonal rainfall trigger as a concrete example. Each grid cell corresponds to a peer, each year is a block, and the trigger produces nonnegative, heavy-tailed, zero-inflated loss vectors. The allocation matrix $A^\star$ determines how realised payouts are redistributed, the conformal certificate $c_i(A^\star)$ provides each peer with a high-confidence obligation cap, and the participation constraints control the aggregate harm. The framework itself is agnostic to the specific trigger or hazard type.

\subsection{Additional Actuarial References}
\label{app:actuarial-context}

This appendix provides additional context for readers from actuarial science and insurance, complementing the shorter discussion in Sec.~\ref{sec:related}. The main distinction is that much of the actuarial risk-sharing literature studies population-level allocation rules under known or modelled loss distributions, whereas our focus is finite-sample, distribution-free certification of a selected allocation policy.

\paragraph{Distribution-dependent risk-sharing rules.}
A central class of actuarial mechanisms allocates losses as functions of the aggregate pool loss~\citep{denuit2012convex,denuit2022risk,dhaene2025axiomatic}. These rules have attractive axiomatic and Pareto-efficiency properties, but require knowledge of the joint distribution, conditional expectations, or conditional quantiles. In contrast, the Certified Allocation Problem assumes only finitely many observed cost vectors and asks for data-derived caps with finite-sample validity.

\paragraph{Participation, cashback, and incentives.}
P2P and mutual insurance designs often introduce cashback, side payments, or Shapley-value allocations to maintain participation incentives \citep{clemente2023optimal,clemente2024risk}. These mechanisms usually express individual rationality in expected-value or surplus-sharing terms. Our participation constraints play a related role, but are enforced in certified tail-cap units: the mechanism is deployed only if the certified harm relative to the identity baseline is below the chosen budget. Strategic behaviour and moral hazard are not modelled here; incorporating them would require an additional incentive-compatibility layer.

\paragraph{Dynamic and multiperiod settings.}
Multiperiod P2P insurance models study reserves, solvency, reinsurance layers, and inter-temporal utility under specified stochastic models \citep{abdikerimova2024multiperiod}. Extending our framework to sequential deployment would require calibration sets that adapt to a moving distribution. This connects naturally to conformal inference under dependence and nonstationarity, and to the recertification questions raised by the E-OBS time-split experiments.

\section{Methodological details}

\subsection{Nonstationarity and Time-Split Protocols}
\label{sec:nonstationarity}

The conformal guarantee requires that deployment blocks are exchangeable with calibration blocks. When the loss distribution drifts over time (as in climate-driven applications), this is best interpreted as a \emph{local stationarity} condition: coverage holds for blocks from the same regime as $\mathcal{C}$. To operationalise this, we partition blocks in temporal order such that the calibration window is as close as possible to the deployment period. Under nonstationarity, certificates should be viewed as a rolling operational contract rather than a permanent guarantee: one re-certifies periodically (\emph{e.g.}, annually) as new blocks arrive. In our experiments we report both random splits (theorem-aligned) and time-ordered splits (robustness diagnostic).

\subsection{Variance-Optimal Doubly-Stochastic baseline (VO-DS).}\label{sec:VO-DS}

To benchmark against classical second-moment risk sharing (not tailored to tail risk), we include a \emph{variance-optimal doubly-stochastic} baseline. Under our convention in~\eqref{eq:alloc}, post-sharing obligations are $x_b(A)=\tilde x_b A$, so if $\Sigma=\mathrm{Cov}(\tilde x_b)$ then $\mathrm{Cov}(\tilde x_b A)=A^\top \Sigma A$. Using the training split, we estimate a shrunken covariance
\[
\widehat\Sigma_\lambda
=(1-\lambda)\widehat\Sigma+\lambda\,\mathrm{diag}(\widehat\Sigma)+\rho_{\text{ridge}} I,
\]
and define the VO-DS reference rule as a minimiser of the quadratic proxy
\begin{equation}
\bar A_{\mathrm{VO}} \in \argmin_{A\in\mathcal{A}_{\mathrm{DS}}(M)}
\;\mathrm{tr} \big(A^\top \widehat\Sigma_\lambda A\big),
\label{eq:vods}
\end{equation}
over the set of admissible doubly-stochastic allocation matrices
\[
\mathcal{A}_{\mathrm{DS}}(M)
=
\left\{
A\in\mathbb{R}^{n\times n}:
A\ge 0,\;
A\mathbf 1=\mathbf 1,\;
A^\top\mathbf 1=\mathbf 1,\;
A_{ji}=0 \text{ whenever } M_{ji}=0
\right\}.
\]
Here $M\in\{0,1\}^{n\times n}$ encodes admissible sharing links (for the unconstrained VO-DS baseline we take $M\equiv \mathbf 1\mathbf 1^\top$). In our implementation, \eqref{eq:vods} is solved approximately on the training split using a projected first-order method, and the resulting $\bar A_{\mathrm{VO}}$ is then treated as a fixed base rule in the train--select--certify pipeline. 

In all three datasets in this paper, VO-DS produces results that closely match global uniform pooling. This behaviour is consistent with prior linear risk-sharing results for doubly stochastic mixing, where equal sharing emerges as an extremal case on complete graphs~\citep{charpentier2021collaborative}. This suggests that, within the class of doubly-stochastic sharing rules considered here, the overall intensity of redistribution may matter more than its precise structure. We leave a systematic study of richer sharing-rule classes for future work.

\section{Experimental details}
\subsection{Synthetic Data Generating Process}
\label{app:synth-dgp}

We generate $B$ exchangeable blocks (years) $b=1,\dots,B$ with $n$ peers arranged on a grid. Each year is an event year with probability
$p_{\mathrm{event}}$; on a non-event year all losses are zero. On an event year $b$, peer $i$ incurs a payout
\[
  X_{b,i} = \mathbf{1}\{\mathrm{hit}_{b,i}=1\} \cdot S_b \cdot E_i,
\]
where $E_i > 0$ is a peer-specific exposure and $S_b > 0$ is a year level severity. Conditional on being an event year, hits are generated via a logistic model
\[
  P(\mathrm{hit}_{b,i}=1 \mid Z_b, S^{\mathrm{sp}}_{b,i})
  = \sigma \big(\beta_0 + \kappa\, r_i + \lambda\, Z_b
    + \rho\, S^{\mathrm{sp}}_{b,i}\big),
\]
with $\sigma(\cdot)$ the sigmoid, $r_i$ an i.i.d.\ peer risk score
(heterogeneity), $Z_b$ a year-level common factor (common shocks), and
$S^{\mathrm{sp}}_{b,i}$ an optional spatially smoothed random field (spatial dependence). The intercept $\beta_0$ is set so that $\sigma(\beta_0) \approx p_{\mathrm{hit}|\mathrm{event}}$ for a typical peer. Exposures $E_i$ are i.i.d.\ lognormal normalised to mean~1. Year severities $S_b$ are Pareto distributed.

\subsection{Evaluation Metrics}
\label{app:synth-metrics}

Let $\mathcal{S}$ denote the set of splits and
$\mathcal{B}_{\mathrm{test}}(s)$ the test blocks for split $s$. For the
deployed policy/certificate pair $(A_{\mathrm{op}}, c_{\mathrm{op}})$:

\emph{Per-agent marginal coverage:}
\[
  \widehat{\mathrm{cov}}_i
  = \frac{1}{\sum_{s} |\mathcal{B}_{\mathrm{test}}(s)|}
    \sum_{s \in \mathcal{S}} \sum_{b \in \mathcal{B}_{\mathrm{test}}(s)}
    \mathbf{1}\{X_{b,i}(A_{\mathrm{op}}) \le (c_{\mathrm{op}})_i\}.
\]
We report the mean and 5th percentile (p05) of
$\{\widehat{\mathrm{cov}}_i\}_{i=1}^n$ across agents.

\emph{Aggregate certified-cap ratio:}
$\mathrm{AggCapRatio}
  = \langle w, c_{\mathrm{op}} \rangle / \langle w, c_0 \rangle$,
where $c_0$ are the identity baseline caps. A value of 1 means pooling offers no improvement over the baseline, values below 1 indicate that pooling reduces the aggregate high-confidence upper bounds on obligations (lower is better)

\emph{Top-decile cap ratio:} the same ratio restricted to the top 10\% of
agents ranked by baseline cap $c_{0,i}$ (lower values indicate larger certified
relief for the highest-risk agents).

\emph{Fraction below nominal (Fr$<.9$):} the proportion of agents whose empirical marginal coverage falls below the nominal $1-\delta$ level (lower is better, zero means all agents are at or above nominal).

\emph{PASS rate:} fraction of splits where the candidate passes the certified harm audit and is deployed (higher indicates more splits where pooling is acceptable).

\subsection{E-OBS: Calibration Length Sensitivity Under Random Splits}
\label{app:eobs-nc-random}

\begin{table}[t]
\centering
\setlength{\tabcolsep}{3pt}
\small
\begin{tabular}{llccccc}
\toprule
 & $n_C$ & PASS & Mean & p05 & Min & Fr$<$.9 \\
\midrule
\multirow{5}{*}{\rotatebox{90}{\tiny Global}} & 10 & 0.94 & 0.883 & 0.856 & 0.824 & 0.848 \\
 & 20 & 0.98 & 0.890 & 0.864 & 0.848 & 0.808 \\
 & 30 & 0.88 & 0.895 & 0.868 & 0.840 & 0.797 \\
 & 40 & 1.00 & 0.884 & 0.860 & 0.832 & 0.854 \\
 & 50 & 0.96 & 0.885 & 0.860 & 0.840 & 0.866 \\
\midrule
\multirow{5}{*}{\rotatebox{90}{\tiny Local}} & 10 & 1.00 & 0.929 & 0.868 & 0.832 & 0.322 \\
 & 20 & 1.00 & 0.929 & 0.872 & 0.852 & 0.344 \\
 & 30 & 1.00 & 0.926 & 0.872 & 0.840 & 0.375 \\
 & 40 & 1.00 & 0.926 & 0.868 & 0.844 & 0.404 \\
 & 50 & 1.00 & 0.925 & 0.868 & 0.844 & 0.408 \\
\bottomrule
\end{tabular}
\caption{E-OBS random splits, varying $n_C$ ($n_{\mathrm{test}}\!=\!5$, 50 splits). Fr$<$.9: fraction of agents below nominal.}
\label{tab:eobs_rand_nc}
\end{table}

Table~\ref{tab:eobs_rand_nc} complements the time-split sensitivity analysis in Table~\ref{tab:eobs_time_nc} by varying $n_C$ under random splits. Random splits remove the explicit temporal extrapolation in the time-ordered protocol, but do not eliminate the nonstationarity and dependence present in the E-OBS record.
They should therefore be interpreted as a diagnostic for finite-sample and post-pooling effects rather than as evidence that the climate blocks are truly exchangeable.

For local pooling, coverage is stable across all $n_C$ (mean $\approx 0.93$, p05 $\approx 0.87$) and the identity baseline achieves mean $\approx 0.96$ with p05 at or above $0.90$ throughout, confirming that the conformal procedure itself is valid and conservative.

Global pooling, however, exhibits systematic marginal undercoverage (mean $0.88$--$0.90$, p05 $0.86$, Fr$< 0.9$ around $80$--$87\%$) even under random splits. This does not reflect a failure of the conformal certificate itself: the identity caps are valid, and the certificate for the \emph{fixed deployed} policy is correct by Theorem~\ref{thm:main}. Rather, global pooling with $\alpha \approx 0.5$ applies an aggressive transformation that concentrates post-pooling obligations (each agent's obligation becomes roughly half their own loss plus half the group mean), thinning the effective tail from which conformal caps are estimated. With only $B = 75$ total blocks and correspondingly small calibration sets, the order-statistic caps for this transformed distribution sit closer to the true quantile, leaving less margin and resulting in empirical coverage slightly below nominal for a substantial fraction of agents.

Local pooling avoids this issue because it applies a milder transformation (neighbourhood averaging preserves more of the original per-agent distribution shape), and the resulting caps retain a conservative margin even at small $n_C$. This highlights a practical trade-off: more aggressive redistribution delivers greater certified tail relief (Table~\ref{tab:eobs_random}) but requires larger calibration sets for conformal caps to remain conservative. Feature-conditional approaches such as conformalised quantile regression \citep{romano2019conformalized} could help tighten caps for the transformed distribution, reducing this sensitivity.

\subsection{E-OBS: Identity Baseline Under Time Splits}
\label{app:eobs-identity}

Table~\ref{tab:eobs_identity_time} shows that the identity baseline also degrades under time-ordered splits, even though it involves no pooling. As $n_C$ increases from 10 to 50, mean coverage drops from 0.957 to 0.931 and the lower tail worsens substantially (p05: $0.836 \to 0.727$; min: $0.727 \to 0.491$), while the fraction of agents below nominal rises from $0.129$ to $0.252$. This confirms that the coverage deterioration observed in Table~\ref{tab:eobs_time_nc} is a genuine nonstationarity effect rather than an artifact of the learned policy or pooling choice.

Table~\ref{tab:eobs_identity_rand} reports the same analysis under random splits. Identity coverage is stable across all $n_C$ (mean $0.960$--$0.962$, p05 $0.900$--$0.904$), with fewer than $5.1\%$ of agents below nominal in all cases. This confirms that the conformal procedure is valid under exchangeability, and that the undercoverage observed for global pooling under random splits (Appendix~\ref{app:eobs-nc-random}) is attributable to the aggressive post-pooling transformation rather than a failure of the calibration procedure.

\begin{table}[t]
\centering
\setlength{\tabcolsep}{3pt}
\small
\begin{tabular}{ccccc}
\toprule
$n_C$ & Mean & p05 & Min & Fr$<$.9 \\
\midrule
10 & 0.957 & 0.836 & 0.727 & 0.129 \\
20 & 0.952 & 0.818 & 0.618 & 0.182 \\
30 & 0.945 & 0.782 & 0.527 & 0.203 \\
40 & 0.939 & 0.764 & 0.509 & 0.222 \\
50 & 0.931 & 0.727 & 0.491 & 0.252 \\
\bottomrule
\end{tabular}
\caption{Identity baseline under time-ordered splits on E-OBS. Coverage degrades with $n_C$ due to nonstationarity, confirming that drift affects all methods equally.}
\label{tab:eobs_identity_time}
\end{table}

\begin{table}[t]
\centering
\setlength{\tabcolsep}{3pt}
\small
\begin{tabular}{ccccc}
\toprule
$n_C$ & Mean & p05 & Min & Fr$<$.9 \\
\midrule
10 & 0.962 & 0.904 & 0.856 & 0.047 \\
20 & 0.962 & 0.904 & 0.876 & 0.043 \\
30 & 0.961 & 0.904 & 0.876 & 0.049 \\
40 & 0.961 & 0.904 & 0.856 & 0.047 \\
50 & 0.960 & 0.900 & 0.860 & 0.051 \\
\bottomrule
\end{tabular}
\caption{Identity baseline under random splits on E-OBS. Empirical coverage remains above nominal across all $n_C$.}
\label{tab:eobs_identity_rand}
\end{table}

\subsection{E-OBS: Nonstationarity Diagnostics}
\label{app:eobs-diagnostics}

The E-OBS precipitation record exhibits clear nonstationarity over 1950--2024. Aggregate trigger count trends upward at $+4.76$/year, with mean losses increasing by $53\%$ between the first and second halves of the record (from $378$ to $576$). Both trigger rate ($+37\%$, $p < 10^{-4}$) and conditional severity ($+11\%$, $p = 0.013$) contribute. Aggregate lag-1 autocorrelation is moderate ($r = 0.23$) but per-peer autocorrelation is weak (median $0.039$), indicating that the aggregate persistence is largely driven by trend rather than short-range temporal dependence. This confirms that nonstationarity is the dominant source of exchangeability violation in time-ordered splits.

\subsection{E-OBS: Sensitivity to Calibration Sample Size}
\label{app:eobs-sensitivity}

The observed undercoverage in some of the E-OBS experiments is partly attributable to the limited number of available calibration blocks ($B=75$). To better understand the undercoverage observed in the E-OBS stress test, we performed an additional diagnostic in which the training, validation, and test blocks were held fixed, and $\alpha^\star$ was selected once. We then restricted calibration to a temporally localised pool of years (using calibration blocks from the same half of the record as the test period) and varied the number $m$ of calibration blocks used for certification; see Table~\ref{tab:eobs_calib_local}. Increasing $m$ from 5 to 17 led to a monotonic improvement in empirical coverage (mean coverage increasing from 0.85 to 0.97) and a substantial reduction in the fraction of agents below nominal coverage (from 0.77 to 0.14). This suggests that part of the observed undercoverage is driven by limited availability of calibration data that are representative of the test period. However, even with the largest feasible localised calibration sets, some agents remained below nominal coverage, indicating that calibration size alone does not fully explain the effect.

\begin{table}[h]
\centering
\small
\begin{tabular}{cccc}
\toprule
$m$ & Cov Mean & Cov p05 & Fr$<$.9 \\
\midrule
5  & 0.851 & 0.666 & 0.767 \\
8  & 0.901 & 0.740 & 0.243 \\
10 & 0.915 & 0.773 & 0.192 \\
12 & 0.934 & 0.787 & 0.160 \\
14 & 0.946 & 0.800 & 0.151 \\
17 & 0.968 & 0.800 & 0.142 \\
\bottomrule
\end{tabular}
\caption{E-OBS calibration-size diagnostic under global pooling. Training, validation, and test blocks are fixed, $\alpha^\star$ is selected once, and certification is repeated using $m$ calibration blocks from a temporally localised pool.}
\label{tab:eobs_calib_local}
\end{table}

\subsection{Energy Cooperative: Data and Preprocessing}
\label{app:energy-data}

The CEL Loureiro dataset \citep{monteiro2024electricity} provides 15-minute smart meter readings for 172 buildings in a Portuguese energy cooperative (Loureiro, Portugal) from May 2022 to September 2023. We aggregate to weekly total consumption per building, discard buildings with $>50\%$ missing weeks (retaining $n = 153$), and drop partial weeks at the start and end of the record, yielding $B = 69$ full weekly blocks.

\paragraph{Deseasonalisation.}
Electricity consumption exhibits strong seasonality (most likely due to winter heating demand). To produce approximately exchangeable blocks, we define each household's weekly loss as the excess above a rolling seasonal baseline: for each building, we compute a 9-week centered rolling median and subtract it, clipping at zero: $\tilde x_{b,i} = \max(0,\, x_{b,i}^{\mathrm{raw}} - \mathrm{median}_9(x_{\cdot,i}))$. This removes the seasonal level while preserving genuine demand shocks. The resulting losses have zero fraction $0.58$, mean $9.2$, median $3.7$, and max $543$.

\subsection{Energy Cooperative and E-OBS: Dependence Structure}
\label{app:energy-dependence}

Table~\ref{tab:dependence_comparison} compares the dependence structure of the energy cooperative and E-OBS datasets. The energy cooperative exhibits weak, unstructured pairwise correlations (mean $r = 0.064$, no spatial block structure), while E-OBS shows spatially structured correlations (visible block diagonal in the correlation matrix) with a heavier right tail (95th percentile
$r = 0.45$ vs.\ $0.35$). This difference explains why global pooling benefits nearly all households in the energy cooperative (idiosyncratic shocks diversify effectively) but creates harmed agents in the E-OBS setting (correlated shocks limit diversification and shift burden onto low-trigger-rate cells).

\begin{table}[h]
\centering
\small
\begin{tabular}{lcc}
\toprule
& Energy coop. & E-OBS \\
\midrule
$n$ (agents) & 153 & 1120 \\
$B$ (blocks) & 69 weeks & 75 years \\
Mean pairwise $r$ & 0.064 & 0.080 \\
Median pairwise $r$ & 0.034 & 0.041 \\
95th percentile pairwise $r$ & 0.350 & 0.446 \\
Spatial structure & None & Block diagonal \\
\bottomrule
\end{tabular}
\caption{Dependence structure comparison. The energy cooperative has weaker,
unstructured correlations, explaining why pooling is nearly Pareto-improving
while E-OBS pooling creates harmed agents.}
\label{tab:dependence_comparison}
\end{table}

\subsection{Energy Cooperative: Sensitivity to Participation Budget and Miscoverage Level}
\label{app:energy-sensitivity}

We investigate the sensitivity of the Energy Cooperative experiment to the participation budget $\varepsilon$ and the target miscoverage level $\delta$. Throughout, we fix $\eta=0$ and report results averaged over 50 random splits.

\paragraph{Participation budget $\varepsilon$.}
The participation budget controls the maximum allowable increase in expected cost for any participant. As $\varepsilon$ increases, the feasible set expands, allowing more aggressive redistribution. Table~\ref{tab:eps_sweep} shows that the selected pooling intensity $\alpha_{\mathrm{op}}$ increases monotonically from $0.05$ to $0.93$ as $\varepsilon$ increases from $0.01$ to $0.20$. At the same time, certified-cap efficiency improves and the empirical coverage remains essentially unchanged across the sweep.
Overall, increasing $\varepsilon$ enables stronger redistribution and larger certified-cap reductions, while increasing $\delta$ relaxes the coverage requirement and yields less conservative allocations.

\begin{table}[h]
\centering
\small
\begin{tabular}{lcccccc}
\toprule
$\varepsilon$ & $\alpha_{\mathrm{op}}$ & Cov Mean & Cov p05 & PASS & AggCap & Top10 \\
\midrule
0.01 & 0.05 & 0.914 & 0.884 & 1.00 & 0.965 & 0.953 \\
0.05 & 0.31 & 0.914 & 0.886 & 1.00 & 0.795 & 0.719 \\
0.10 & 0.57 & 0.913 & 0.884 & 1.00 & 0.641 & 0.484 \\
0.20 & 0.93 & 0.914 & 0.900 & 1.00 & 0.492 & 0.180 \\
\bottomrule
\end{tabular}
\caption{Sensitivity to the participation budget $\varepsilon$ in the Energy Cooperative experiment.}
\label{tab:eps_sweep}
\end{table}

\paragraph{Miscoverage level $\delta$.}
We also vary the target miscoverage level $\delta$ as shown in Table~\ref{tab:delta_sweep}. Larger values of $\delta$ permit less conservative certification and therefore smaller pooling intensities. The selected $\alpha_{\mathrm{op}}$ decreases monotonically from $1.00$ at $\delta=0.05$ to $0.36$ at $\delta=0.20$. Empirical coverage tracks the nominal target coverage level $1-\delta$ throughout the sweep.

\begin{table}[h]
\centering
\small
\begin{tabular}{lccccccc}
\toprule
$\delta$ & Nominal & $\alpha_{\mathrm{op}}$ & Cov Mean & Cov p05 & PASS & AggCap & Top10 \\
\midrule
0.05 & 0.95 & 1.00 & 0.976 & 0.976 & 1.00 & 0.281 & 0.070 \\
0.10 & 0.90 & 0.93 & 0.914 & 0.900 & 1.00 & 0.492 & 0.180 \\
0.15 & 0.85 & 0.70 & 0.858 & 0.828 & 1.00 & 0.692 & 0.398 \\
0.20 & 0.80 & 0.36 & 0.804 & 0.764 & 0.98 & 0.904 & 0.700 \\
\bottomrule
\end{tabular}
\caption{Sensitivity to the target miscoverage level $\delta$ in the Energy Cooperative experiment.}
\label{tab:delta_sweep}
\end{table}

\end{document}